\newtheorem{myDef}{\textbf{Definition}} 
\newtheorem{myTheo}{\textbf{Theorem}}
\begin{document} 
\title{Model Inversion Attack via Dynamic Memory Learning}


\author{Gege Qi}
\email{qigege.qgg@alibaba-inc.com}
\affiliation{%
  \institution{Alibaba Group}
  \city{Hang Zhou}
  \country{China}
}

\author{YueFeng Chen}
\email{yuefeng@alibaba-inc.com}
\affiliation{%
  \institution{Alibaba Group}
  \city{Hang Zhou}
  \country{China}}

\author{Xiaofeng Mao}
\email{mxf164419@alibaba-inc.com}
\affiliation{%
  \institution{Alibaba Group}
  \city{Hang Zhou}
  \country{China}
}

\author{Binyuan Hui}
\email{binyuan.hby@alibaba-inc.com}
\affiliation{%
 \institution{Alibaba Group}
 \city{Beijing}
 \country{China}}

\author{Xiaodan Li}
\email{fiona.lxd@alibaba-inc.com}
\affiliation{%
  \institution{Alibaba Group}
  \city{Hang Zhou}
  \country{China}}

\author{Rong Zhang}
\email{stone.zhangr@alibaba-inc.com}
\affiliation{%
  \institution{Alibaba Group}
  \city{Hang Zhou}
  \country{China}}

\author{Hui Xue}
\email{hui.xueh@alibaba-inc.com}
\affiliation{%
  \institution{Alibaba Group}
  \city{Hang Zhou}
  \country{China}}

\renewcommand{\shortauthors}{Gege Qi et al.}

\newcommand{\name}{\textsc{Dmmia}\xspace}
\def\etal{\emph{et al.}}
\def\eg{\emph{e.g.}}
\def\ie{\emph{i.e.}}
\renewcommand{\proofname}{\textbf{Proof}}
\begin{abstract}
Model Inversion (MI) attacks aim to recover the private training data from the target model, which has raised security concerns about the deployment of DNNs in practice. 
Recent advances in generative adversarial models have rendered them particularly effective in MI attacks, primarily due to their ability to generate high-fidelity and perceptually realistic images that closely resemble the target data. 
In this work, we propose a novel Dynamic Memory Model Inversion Attack (\name) to leverage historically learned knowledge, which interacts with samples (during the training) to induce diverse generations.
\name constructs two types of prototypes to inject the information about historically learned knowledge: Intra-class Multicentric Representation (IMR) representing target-related concepts by multiple learnable prototypes, and Inter-class Discriminative Representation (IDR) characterizing the memorized samples as learned prototypes to capture more privacy-related information.
As a result, our \name has a more informative representation, which brings more diverse and discriminative generated results.
Experiments on multiple benchmarks show that \name performs better than state-of-the-art MI attack methods. 
\begin{figure}
  \centering
  \includegraphics[width=1.0\linewidth]{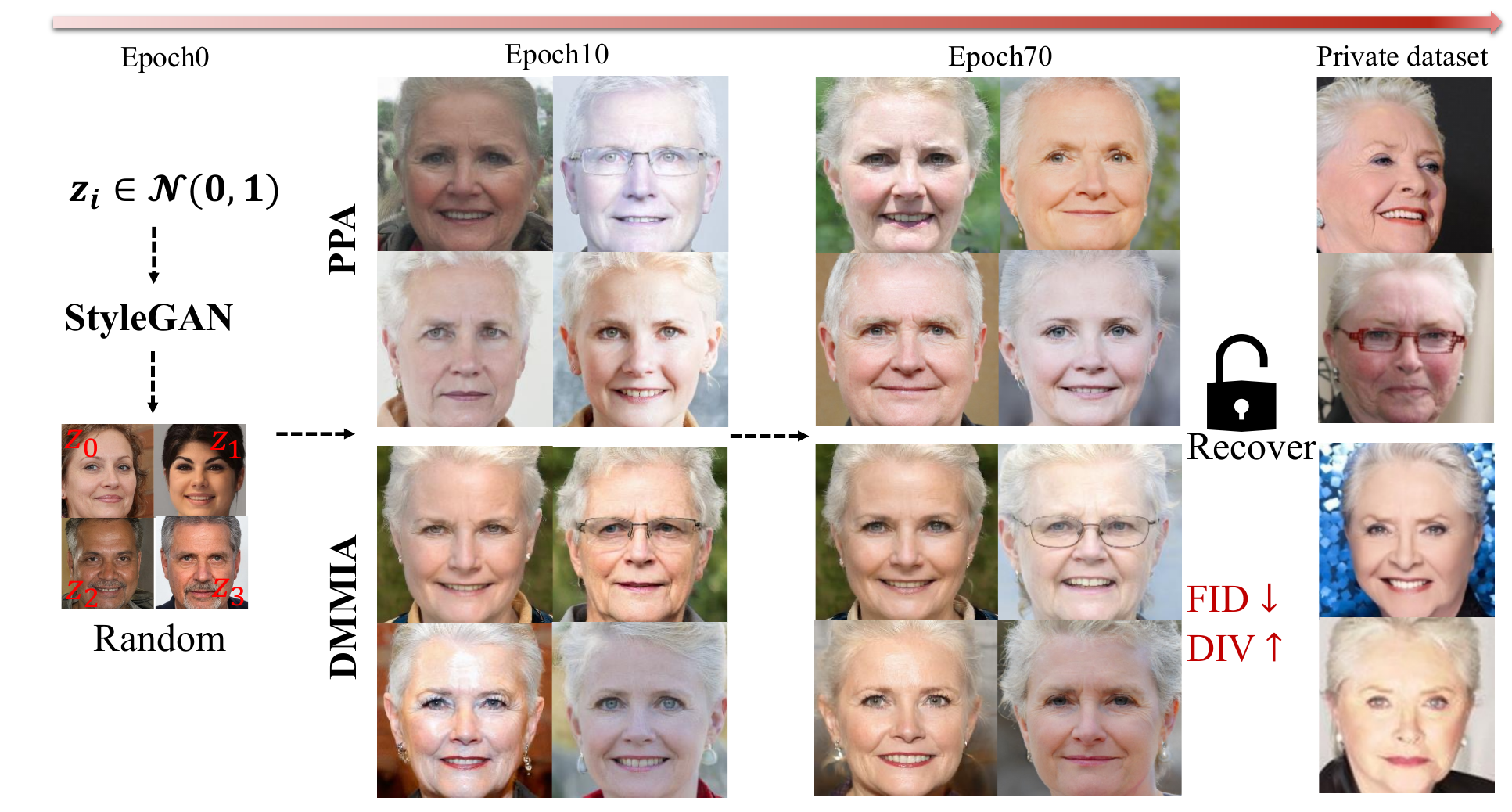}
  \vspace{-0.5cm}
  \caption{Comparison between \name and the baseline model inversion attack. \name has higher sample diversity while achieving more accurate privacy data reconstructions. In this example, our method can preserve more characteristics (\eg, glasses) of the target class during optimization.}
  \vspace{-0.6cm}
  \label{fig:intro}
\end{figure}
\end{abstract}


\ccsdesc[500]{Computing methodologies~Computer vision problem}

\keywords{Model Inversion Attack, Intra-class Multicentric Representation, Inter-class Discriminative Representation\let\thefootnote\relax\footnotetext{This research is supported in part by the National Key Research and Development Program of China under Grant No.2020AAA0140000.}}



\maketitle

\section{Introduction}
Deep neural networks (DNNs) have achieved great success in a wide range of applications, including facial recognition \cite{parkhi2015deep,schroff2015facenet}, personalized medicine \cite{academy2015stratified} and product recommendation \cite{wu2017session}. However, the fact that privacy-sensitive applications of DNNs increasingly utilize sensitive and proprietary information raised great concerns about privacy. Recently, \cite{fredrikson2015model} proposed a model inversion (MI) attack aiming to reconstruct sensitive features of private training data by leveraging their correlation with the model output. The study highlights the vulnerability of linear regression models used in personalized medicine to such attacks, leading to the disclosure of confidential genomic data of individual patients.

While MI attacks have traditionally been limited to discrete signals, recent advances have extended the attack to high-dimensional and continuous signals such as image data using Generative Adversarial Networks (GANs) \cite{zhang2020secret,hidano2017model,chen2021knowledge}.
However, the effectiveness of GAN-based MI attacks is limited by an infamous problem called "catastrophic forgetting" \cite{lesort2019generative}. In MI attacks, as shown in Figure~\ref{fig:intro}, this tendency is expressed as that the generated samples from early updating epochs contain more characteristics (\eg, glasses), but some are dropped during training progress. This property leads to poor attack performance and limited diversity in generation results.


One common approach to mitigate catastrophic forgetting in GANs is to employ a progressive training strategy, which involves gradually adding new characteristics to the generated samples while preserving previously learned ones, such as gradient penalties \cite{thanh2018catastrophic}. Another approach is to use a memory module that can store and recall previously learned information \cite{chenshen2018memory}.
Our approach leverages memory replay mechanisms to maintain previously learned features while adapting to new characteristics, resulting in more effective MI attacks and increased diversity in generated samples.
By addressing the issue of forgetting in GAN-based MI attacks, our work represents a significant step forward in the development of efficient MI attacks without modifying the architecture of GANs.

Towards this end, we propose a novel MI attack leveraging a Dynamic Memory Mechanism, which effectively reuses the memory of the inversion data by prototype learning, named as \name.
Specifically, a prototype is defined as ``a representative embedding for a group of semantically similar instances.''
\name composes two types of prototypes, which are designed to memorize intra- and inter-classes information, respectively. In detail: (1) \textbf{Intra-class Multicentric Representation (IMR)}: IMR uses multiple centers stored by prototypes to present the concept of the target class, which increases the sample diversity from the perspective of intra-class relationships. (2) \textbf{Inter-class Discriminative Representation (IDR):} Given a specific class, it stores the historical knowledge of previously synthesized images and enforces the embedding feature of a sample to be more similar to its corresponding prototypes compared to other prototypes. 

Along with progressively-updated prototypes, \name can well preserve the diversity of the target compared with the previous method, as shown in Figure~\ref{fig:intro}.
However, one might wonder why this memory mechanism is helpful in increasing sample diversity. We perform a theoretical study on the sample diversity of \name.
In fact, the floor level of the geodesic distance (the shortest path between the vertices) between any two generated images is larger than that without proposed prototypes (see Section~\ref{sec:diversity}).
It confirms that \name can preserve more diversities of the target distribution.

We perform extensive experiments with both sensitive face and natural datasets, diverse model architectures, and different levels of image prior. Experimental results validate that \name improves the attack success rate and can synthesize high fidelity and diverse images. Towards attacks with and without high-quality image prior, we outperform state-of-the-art methods by up to 6.26\% and 20.8\% accuracy rates respectively.
Our contributions are as follows: 
\begin{itemize}
\item We propose a novel dynamic memory model inversion attack (\name), which applies an Intra-class Multicentric Representation (IMR) term and an Inter-class Discriminative Representation (IDM) term to model the representation of target classes by memory induction.
\item  We give both theoretical and empirical insights to validate \name's effectiveness.
\item Our proposed \name achieves new state-of-the-art attack performance on multiple benchmarks.
\end{itemize}

\begin{figure*}
  \centering
  \includegraphics[width=0.85\linewidth]{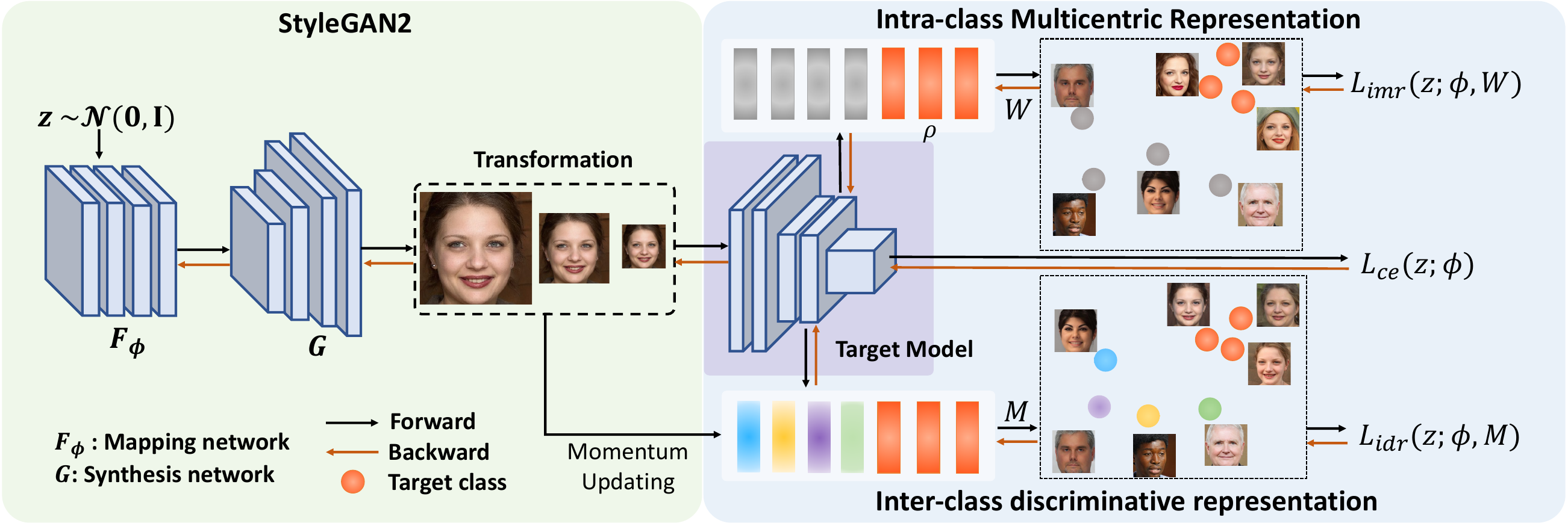}
  \vspace{-0.4cm}
  \caption{Overview of Dynamic Memory Model Inversion Attack. Given a target model, the sampled vector z from the Gaussian distribution is first transformed to a disentangled latent code with semantic meanings by Mapping Network. Then, the Synthesis network modifies this latent code to an image. With proposed IMR and IDR, the memory representations are modeled for updating Mapping Network and learnable W by back-propagating the proposed losses.}
  \vspace{-0.3cm}
  \label{fig:pipline}
\end{figure*}

\section{Related Work}

\paragraph{Catastrophic forgetting.} 
Methods for overcoming catastrophic forgetting can be categorized as: (1) Continual learning-based methods, like EWC~\cite{kirkpatrick2017overcoming}, aim to minimize weight changes in critical areas for previous tasks by estimating the diagonal empirical Fisher information matrix. Based on EWC, \cite{seff2017continual} has explored label-conditioned image generation to enhance image realism. However, it faces challenges in generating high-quality images. Lifelong GAN \cite{zhai2019lifelong} utilizes knowledge distillation to transfer knowledge from prior networks to the news. Piggyback GAN~\cite{zhai2020piggyback} is a parameter-efficient approach by weight reusing when extending a trained network to new tasks. These methods aim to enable models to adapt to new tasks without knowledge forgetting. However, MI attacks focus on exploiting vulnerabilities in the model's behavior to extract sensitive information rather than preserving knowledge across tasks. (2) Incremental learning is a relatively work to study the problem of catastrophic forgetting. DCIGAN~\cite{guan2019dcigan} uses a GAN generator to store the information of past data and continuously update GAN parameters with new data. For MI attacks, learning about the target data is an incremental process, making it challenging to train auxiliary GANs to record historical information effectively.
(3) Our work aligns with the concept of memory replay, where images from a model trained on previous tasks are combined with current training images for updates.

\paragraph{Model inversion attacks.} 
Our work aims to recover private training data or sensitive attributes from access to a trained model. One of the earliest MI attacks specific to a linear regression model was proposed on genomic privacy \cite{fredrikson2014privacy}. For threat models with high-dimensional inputs, \cite{fredrikson2015model} first used gradient descent to solve the underlying attack optimization problem. To improve the attack performance in image space, Generative Adversarial Networks (GANs) were introduced for synthesizing high-quality samples \cite{chen2020improved,zhang2020secret,khosravy2022model}. Further, due to the great success of StyleGAN2 \cite{karras2020analyzing} in generating high fidelity images, \cite{wang2021variational} built upon this generator for higher sample realism. 
Though effective, these methods heavily depend on the distribution shift between the auxiliary and private data. For example, by pretraining on auxiliary datasets in terms of blurred or partially blocked images, \cite{zhang2020secret} improved the identification accuracy of reconstructed images. \cite{struppek2022plug} introduced a robust and flexible attack, which only built the target upon the independent image priors. Existing MI attacks have overlooked the catastrophic forgetting issue. In addressing this problem, we find that reusing historical information can be advantageous in improving the capture of the target data distribution while preserving diversity.


Moreover, several papers considered the black-box setting without access to a model’s gradients \cite{wu2016methodology,yang2019adversarial,aivodji2019gamin,MIRROR}. In this paper, we take a new perspective to address the white-box MI attacks. We aim to capture more identity-representative information from historically synthesized images. We strive for dynamic memory learning via defined prototypes, which helps to prevent forgetting during training, resulting in high threats to target models.

\section{Problem Formulation}

In white-box MI attacks, attackers have full access to the target model trained on a private dataset $\mathcal{D}=\{(x_i,y_i),i\in(1,\cdots, N)\}$, drawn from joint distribution $P_{\mathbb{X},\mathbb{Y}}$, where $x_i \in \mathbb{X}$ and $y_i \in \mathbb{Y}=\{1, \cdots, K\}$ denote the input image and the corresponding class label, respectively. For target $K$ class classifer $f_c: \mathbb{X}  \rightarrow \mathbb{Y}  \in \mathbb{R}^K$, the goal of is to synthesis $\mathcal{D}^S\{\hat{x}_i|y_i = y_t\}$ to approximate the training data $\mathcal{D}^X\{x_i| y_i = y_t\}$, where $y_t$ is the specific target label.

However, when $x$ is high-dimensional data, directly optimizing the objective may directly result in meaningless features for generated results. To address this issue, generative models are introduced to generate high fidelity and semantic images \cite{wang2021variational,chen2021knowledge,zhao2021exploiting}. 
Thus the attack process switches to producing private data that matches the target model using a generative model.
As shown in Figure~\ref{fig:pipline}, we use a pre-trained StyleGAN \cite{karras2019style} with auxiliary knowledge as the generative models, which  the attack's knowledge of the target model's classification intent. Instead of feeding the input latent code $z$ directly to the beginning of the network, the \emph{mapping network} $F_\phi(z)$ first converts it to an intermediate latent code, which is then fed into a \emph{synthesis network} $G$ to synthesize images. 
Thus, the optimization of the $x$ becomes the optimization of the network parameter $\phi$ in the mapping network, and the MI attack is defined as follows:
\begin{align}\label{eq:MIAF}
\min_{\phi} \mathcal{L}(f_c( \hat{x} ),  y_t), \hat{x} = G\left(F_\phi(z)\right), z \sim \mathcal{N}(0, I)
\end{align}
To avoid catastrophically forgetting previously learned knowledge, in the following section, we propose a solution for optimizing the above objective function by constructing progressive prototypes to reuse historical knowledge.

\section{Methodology}
We propose a Dynamic Memory Model Inversion Attack (\name) to improve the generated results by exploring representation prototypes from two clues: 1) Intra-class Multicentric Representation (IMR). 2) Inter-class Discriminative Representation (IDR).
The overview of our method is depicted in Figure~\ref{fig:pipline}.

\subsection{Intra-class multicentric representation} 

To take full advantage of learned knowledge, we design an intra-class multicentric representation (IMR) prototype module. IMR represents the target class with multiple concepts through learnable prototypes, where each prototype represents a specific concept. 
we find that the generated samples tend to overfit by emphasizing the specific features for the target model, thereby losing the diversity within the class. For learning multiple concepts for each class by multicentric prototypes, we divide the learnable prototypes into positive and negative components. The positive part captures the salient characteristics associated with the target, while the negative part represents undesirable features to be avoided. By aligning the embedding feature of the generated sample with the prototypes, the IMR module effectively increases the sample diversity.

Formally, we define a set of trainable parameters $W\in\mathbb{R}^{N_w \times N_d}$ as learnable prototypes, where the positive prototype set $W^{p}=\{w_i\in\mathbb{R}^{1 \times N_d}, i\in[1:\rho]\}$ records multiple concepts for the target class and negative prototype set $W^n=\{w_i\in\mathbb{R}^{1 \times N_d}, i\in[\rho +1:N_w]\}$ are prototypes used to distinguish from the target class.
Let $f_e(\hat{x})$ denote a tensor image feature of dimension $N_d$, obtained by passing a generated image $\hat{x}$ through the feature extractor of the target model. We interpret each positive prototype in $W^p$ as a concept in the feature space of the target class. Then, the probability of a concept $f_e(\hat{x})$ in feature space belonging to the target class can be defined as:
\begin{align}
p_{imr}(\hat{x}) = \frac{\sum\nolimits_{i=1}^{\rho} \exp \left(f_e(\hat{x})^{T}w_i\right)}{\sum\nolimits_{j=1}^{N_w} \exp \left({f_e(\hat{x})^{T} w_j}\right)}
\end{align}
IMR aims to find the mapping network parameters $\phi$ and prototypes $W$ that maximize the log-likelihood function of the generated samples. Finally, IMR learns target class concepts by minimizing:
\begin{gather}
\mathcal{L}_{\rm imr}(z;\phi,W) = -\log p_{imr}(\hat{x}), \nonumber \\ 
where \quad \hat{x} = G(F_{\phi}(z))
\end{gather}
Consequently, through the learning of multiple concepts of the target class via prototype learning, IMR increases the sample diversity from the perspective of intra-class relations.

\begin{algorithm}
  \caption{\name}
  \label{alg:dmmia}
  \begin{algorithmic}[1]
    \State \textbf{Input:} Synthesis net $G$; Mapping net $F_{\phi}$; Target net $f$; Momentum coefficient $r$; Training epoch $T$
    \State \textbf{Initialize}: IMR set $W$; Empty IDR set $M$
    \Procedure{Update}{$W,M,F_{\phi}$}
      \While{Training step $i<T$}
        \State $z_i \gets \mathcal{N}(0, I)$\Comment{Sample latent vector}
        \State $\hat{x}_i \gets G(F_{\phi}(z_i))$
        \State $ \phi_{i+1} \gets\nabla_{\phi_i}\mathcal{L}_{\name}(\hat{x}_i,y_t,\phi_{i}, W_i, M_i)$
        \State $W_{i+1} \gets  \nabla_{W_i}\mathcal{L}_{imr}(\hat{x}_i,y_t,W_{i})$\Comment{Fix $\phi_{i+1}$}
        \State $M_{i+1} \gets rM_{i} + (1-r)M^{'}_{i}$\Comment{update memory bank}
      \EndWhile\label{euclidendwhile}
    \EndProcedure
  \end{algorithmic}
\end{algorithm}
\setlength{\abovedisplayskip}{3pt}
\subsection{Inter-class discriminative representation}
Besides exploring the target representation from intra-relation via IMR, we design an inter-class discriminative representation (IDR) module. As aforementioned, the previously generated unique features are generally discarded during optimization. IDR defines a set of non-parameter prototypes by maintaining a memory bank, where the embedding features $f_e(\hat{x})$ of generated images are stored for prototype metric learning. 
Specifically, historical features of input images are injected into the IDR prototype set $M_c$ according to their predicted class $c=\arg\max p(\hat{x})$,  where $p(\hat{x})$ represents the prediction probability of target model.
We denote the IDR prototypes as $M\in\mathbb{R}^{K \times N_d}$, where $M_{i}\in\mathbb{R}^{N_d}$ records the feature in memory for the i-th class.  
Then, IDR repels different categories by measuring the similarity between the current embedding feature $f_e(\hat{x})$ and each class of discriminative prototypes. The probability of a generated sample $\hat{x}$ belonging to the target class is denoted as:
\setlength{\abovedisplayskip}{3pt}
\begin{align}
p_{idr}(\hat{x}) = \frac{\exp \left({ f_e(\hat{x})^T M_{y_t}} \right)}{\sum\nolimits_{i=1}^{K} \exp \left({f_e(\hat{x})^TM_{i} }\right)}
\end{align}
\setlength{\abovedisplayskip}{3pt}
Finally, $f_e(\hat{x})$ is aligned with corresponding prototypes in embedding space by:
\setlength{\abovedisplayskip}{3pt}
\begin{gather}
\mathcal{L}_{\rm idr}(z;\phi,M) = -\log p_{idr}(\hat{x}), \nonumber \\ 
where \quad \hat{x} = G(F_{\phi}(z))
\end{gather}
\setlength{\abovedisplayskip}{3pt}
\paragraph{Momentum Updating.}
With the updating of $F_{\phi}$ of IDR during training, the memory bank is updated by the current image feature $f_e(\hat{x})$.
To update the memorized prototypes in a more stable way, we propose to use a momentum update as follows:
\begin{equation}
\label{eq:total_dm_}
M_{c} = rM_{c} + (1-r)M^{'}_c 
\end{equation}
where $c=\arg\max p(\hat{x})$ is the predicted class and $r\in[0,1]$ is a momentum coefficient. $M^{'}_{c}$ indicates the mean of current prototype features of the c-th class. With progressively-updated memorized prototypes, $\mathcal{L}_{\rm idr}$ represents the privacy data from historical knowledge. Consequently, IDR encourages the generated images in the same class to have more distinguishable characteristics, \ie, revealing more privacy-sensitive information.

\subsection{Overview of the training}
Following a previous attack method \cite{zhang2020secret}, cross-entropy loss \cite{de2005tutorial} $\mathcal{L}_{\rm ce}$ is used to enforce the inverted images to be correctly classified as the target label $y_{t}$. With the memory learning loss defined, the objective loss function is:
\setlength{\abovedisplayskip}{3pt}
\begin{align}\label{eq:total_dm}
\mathcal{L}_{\rm \name} &= \mathcal{L}_{\rm ce} + \lambda_1 \cdot \mathcal{L}_{\rm imr} + \lambda_2 \cdot \mathcal{L}_{\rm idr}
\end{align} 
\setlength{\abovedisplayskip}{3pt}
where $\lambda_1$ and $\lambda_2$ are scalars balancing the influence of two prototypes. We remark that the $\phi$ and $W$ are optimized alternately, and the details of \name are presented in Algorithm~\ref{alg:dmmia}. 
Firstly, we sample a predetermined number of latent vectors $z$, and then only the 200 latent vectors with the highest prediction scores are used for optimization.

\section{Analysis on the Diversity of \name}
\label{sec:diversity}
In this subsection, we analyze how \name affects the intra-class diversity, which is an important metric of a good MI attack. We analyze the relationship between the probability distribution and the memory projected probability distribution from the perspective of KL-divergence. Without loss of generality, we assume $N_w = K$, the IMR and IDR become $W \in  \mathbb{R}^{K \times N_d}$ and $M\in \mathbb{R}^{K \times N_d}$. We define $p(y|\hat{x}) = softmax f_c(\hat{x})$ as the probability distribution on target model under the $\mathcal{L}_{ce}$ supervision. 
The memory projected probability distribution is then $q(y|\hat{x}) = softmax \left((W+M)^T f_e(\hat{x}) \right)$.

\begin{myDef}
Let $f:\mathbb{X} \rightarrow \mathbb{Y}$ be a smooth mapping. When $W \in  \mathbb{R}^{K \times N_d }$ and $ M\in  \mathbb{R}^{K \times N_d }$, $(W+M)^T f_e(\hat{x})$ can be represented as a linear transformation of $f_e(\hat{x})$. There exists an $\eta$ such that $f_e(\hat{x}+\eta) = (W+M)^T f_e(\hat{x})$.
\end{myDef}
\begin{myDef}
Let $\mathbb{X}$ be a Riemannian manifold with the Fisher information matrix $\mathbf{G}_{\hat{x}}$ as its metric tensor. $s=\left[p_{1}(\hat{x}),\cdots,p_{K}(\hat{x})\right]^{T}$ is the output of the softmax layer in function $f$, where $\mathbf{G}_{s}$ is the Fisher information matrix associated with $s$. 
\end{myDef}

According to the defination, $\hat{x} + \eta$ represents the generated optimal sample corresponding to $\mathcal{L}_{\rm imr}$ and $\mathcal{L}_{\rm idr}$. 
We can use $p(y|\hat{x} + \eta)$ to approximate the distribution of $q(y|\hat{x})$. 
Thus, our goal is converted to observe the KL-divergence between $p(y|\hat{x})$ and $p(y|\hat{x} + \eta)$. 
During the optimization, $\hat{x}$ and $\hat{x} + \eta$ are progressively aligned to the specified target class, enabling consistent decreasing KL-divergence between the corresponding probability distribution from target models.
Assuming that a sufficiently small $\eta$ is obtained by Algorithm~\ref{alg:dmmia}, an intriguing property of \name is defined as follows.
\begin{myTheo}
Let $\hat{x} = G\left(F_\phi(z)\right)$ be the synthesised image. Minimizing the KL-divergence between the distribution $p(y|\hat{x})$ and $p(y|\hat{x}+\eta)$ is encourages by optimizing the $\mathcal{L}_{\rm \name}$, which is equivalent to: $\arg\min \sum_{i = 1}^{K}\frac{1}{p_i}$, $\operatorname{s.t.} \sum_{i=1}^{K}p_{i} = 1$. 
Then, we have $p_1 = p_2 = ... = p_{K} = \frac{1}{K}$, where $p_i$ is the prediction probability of i-th class.
\end{myTheo}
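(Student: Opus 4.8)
The plan is to separate the statement into two logically independent pieces. The first is a \emph{geometric reduction}: showing that minimizing $\mathrm{KL}\!\left(p(y|\hat{x}) \,\|\, p(y|\hat{x}+\eta)\right)$ is tantamount to the constrained program $\arg\min \sum_{i=1}^{K} 1/p_i$ subject to $\sum_{i=1}^{K} p_i = 1$. The second is the \emph{elementary optimization}: solving that program and showing the minimizer is the uniform distribution. I expect the reduction to carry essentially all of the difficulty, whereas the optimization is a one-line convexity argument.

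For the reduction I would use that $\eta$ is assumed small (it is the increment produced by Algorithm~\ref{alg:dmmia} that aligns $\hat{x}$ with the memory-projected target) and Taylor-expand the divergence to second order. For a fixed output family the first-order term vanishes and the leading contribution is the Fisher quadratic form $\tfrac{1}{2}\,\eta^{T}\mathbf{G}_{\hat{x}}\,\eta$, where $\mathbf{G}_{\hat{x}}$ is precisely the metric tensor of Definition~2. Invoking Definition~1, the perturbation acts as a linear map, so I would push the metric forward into the softmax coordinates $s=[p_1(\hat{x}),\dots,p_K(\hat{x})]^{T}$ through the Jacobian $J=\partial s/\partial \hat{x}$, giving $\mathbf{G}_{\hat{x}} = J^{T}\mathbf{G}_{s}J$. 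The crucial computation is that for a categorical output the Fisher information in the probability coordinates is diagonal, $\mathbf{G}_{s} = \operatorname{diag}(1/p_1,\dots,1/p_K)$, which follows by differentiating the categorical log-likelihood $\log p(y) = \sum_i \mathbf{1}[y=i]\log p_i$ twice and taking expectations. The aggregate sensitivity that bounds the KL floor is then $\operatorname{tr}\mathbf{G}_{s} = \sum_{i=1}^{K} 1/p_i$, the target objective, while the simplex constraint $\sum_i p_i = 1$ is inherited from $s$ being a probability vector.

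With the reduction in place, the optimization is immediate. I would form the Lagrangian $\mathcal{L}(p,\lambda) = \sum_i 1/p_i - \lambda\!\left(\sum_i p_i - 1\right)$; stationarity gives $-1/p_i^{2} = \lambda$ for every $i$, forcing all $p_i$ equal, and the constraint then pins $p_i = 1/K$. Strict convexity of $\sum_i 1/p_i$ on the positive orthant (its Hessian $\operatorname{diag}(2/p_i^{3})$ is positive definite) together with the affine constraint certifies that this interior critical point is the unique global minimizer. Equivalently, Cauchy--Schwarz gives $\big(\sum_i p_i\big)\big(\sum_i 1/p_i\big) \ge K^{2}$, so $\sum_i 1/p_i \ge K^{2}$ with equality exactly at $p_i = 1/K$.

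The main obstacle I anticipate is making the first step genuinely precise rather than heuristic. Two points need care. First, the second-order expansion governs an \emph{infinitesimal} displacement, so I must argue that the alignment dynamics keep $\eta$ small enough for the Fisher form to control the true KL, and that the direction of $\eta$ (fixed by the map $(W+M)^{T}$ of Definition~1) contracts against the pushed-forward metric so that the operative scalar really is the trace $\sum_i 1/p_i$ and not some other quadratic form $\delta s^{T}\mathbf{G}_s\,\delta s$; this is where an implicit normalization of the coordinate perturbation enters and should be stated explicitly. Second, I would confirm that $\sum_i p_i = 1$ is the only active constraint and that $p_i>0$ holds throughout, since the objective is singular on the boundary of the simplex; this places the minimizer in the relative interior where the Lagrange computation is valid.
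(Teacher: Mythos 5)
Your proposal follows essentially the same route as the paper's proof: a second-order Taylor expansion of the KL-divergence into the Fisher quadratic form $\tfrac{1}{2}\eta^{T}\mathbf{G}_{\hat{x}}\eta$, the pushforward $\mathbf{G}_{\hat{x}}=J^{T}\mathbf{G}_{s}J$ into softmax coordinates, reduction to $\operatorname{tr}(\mathbf{G}_{s})=\sum_{i=1}^{K}1/p_{i}$, and the constrained minimization over the simplex. If anything you are more complete than the paper, which asserts rather than derives the diagonal form of $\mathbf{G}_{s}$, omits the Lagrangian/Cauchy--Schwarz step yielding $p_{i}=1/K$, and does not address the point you rightly flag about why minimizing the quadratic form $\eta^{T}J^{T}\mathbf{G}_{s}J\eta$ should reduce to minimizing the trace of $\mathbf{G}_{s}$.
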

\begin{proof}
\renewcommand{\qedsymbol}{}
Let $y$ be the random variable ranging from $y_1$ to $y_k$. 
The KL-divergence between the distribution $p(y|\hat{x})$ and $p(y|\hat{x}+\eta)$ can be expanded via the second-order Taylor expansion:
\setlength{\abovedisplayskip}{3pt}
\begin{gather}
D_{\rm KL}(p(y|\hat{x}),p(y|\hat{x} + \eta )) = \mathbb{E}_{y}[\log\frac{p(y|\hat{x})}{p(y|\hat{x} + \eta)}] \approx \frac{1}{2} \eta^{T}\mathbf{G}_{\hat{x}}\eta \nonumber\\
\mathbf{G}_{\hat{x}} = \mathbb{E}_{x\sim \mathbb{X},y\sim p(y|\hat{x})}\left[g_x(\hat{x},y)g_x(\hat{x},y)^T\right]
\end{gather}
\setlength{\abovedisplayskip}{3pt}
where $\mathbf{G}_{\hat{x}} $ is the fisher information matrix of $\hat{x}$, and $g_x(\hat{x},y)$ is the gradient to input $\hat{x}$ w.r.t on the loss for label $y$. It is difficult to compute the high dimension matrix $\mathbf{G}_{\hat{x}}$. We follow \cite{shen2019defending} to formulate $\mathbf{G}_{\hat{x}}$ as a new matrix $\mathbf{G}_{s}$ through $\mathbf{G}_{\hat{x}} =J^T\mathbf{G}_{s}J$. $\mathbf{G}_{s}$ is the fisher information matrix of the output of the softmax layer $s=[p_1(\hat{x}),\cdots,p_K(\hat{x})]^T$, which has been defined in Definition 2. The term $J$ is the Jacobian matrix of $f_{c}$ and can be computed by $J = \frac{\partial s_i}{\partial \hat{x}}$. To this end, $\mathbf{G}_s$ is a $K \times K$ positive definite matrix and formulated as:
\begin{align}
\mathbf{G}_{s} = \mathbb{E}_{s\sim \mathbb{Y},y\sim p(y|s)}[g_s(s,y)g_s(s,y)^T] 
\end{align}
where $g_s(s,y)$ is the gradient to $s$ w.r.t on the loss for label $y$. Then, Equation 10 becomes:
\setlength{\abovedisplayskip}{3pt}
\begin{align}
\label{app_Gf}
\eta^T\mathbf{G}_{\hat{x}}\eta = \eta^TJ^T\mathbf{G}_{s}J\eta 
\end{align}
\setlength{\abovedisplayskip}{3pt}
During the optimization of GANs, $\hat{x}$ and $\hat{x} + \eta$ are aligned to the target class while gradually minimizing the KL-divergence between them. This minimization decreases the variances of $\eta$, while seeking the smallest eigenvalue of $\mathbf{G}_{\hat{x}}$, \ie, minimizing the eigenvalues of $\mathbf{G}_{s}$. The trace of metric equals the summation of all eigenvalues, which are all positive. Hence, minimizing eigenvalues is equivalent to finding the smallest trace. The trace of $\mathbf{G}_{s}$ can be computed as:
\begin{equation}
\begin{split}
tr(\mathbf{G}_{s}) &= tr(\mathbb{E}_{s\sim \mathbb{Y},y\sim p(y|s)}[g_s(s,y)g_s(s,y)^T]) \\
&= tr(\mathbb{E}_{s}[(\nabla_{s}\operatorname{log}p(y|s)) (\nabla_{s}\operatorname{log}p(y|s))^T]) \\
&= \int_y p(y|s)[tr(\nabla_{s}\operatorname{log}p(y|s))^T(\nabla_{s}\operatorname{log}p(y|s))] \\
&= \sum_{i=1}^{K}p_{i} \sum_{j = 1}^{K}(\nabla_{p_{j}}\operatorname{log}p_{i})^{2}  = \sum_{i = 1}^{K}\frac{1}{p_i} 
\end{split}
\end{equation}
Then, minimizing $tr(\mathbf{G}_{s})$ is equivalent to finding the optimal solution of $\arg\operatorname{min} \sum_{i = 1}^{K}\frac{1}{p_i}, \operatorname{s.t.} \sum_{i=1}^{K}p_{i} = 1$.
\end{proof} 
Different from general-purpose, which only matches the target class-specific feature, the memory-guided prototype learning enforces $F_{\phi}$ optimizing towards uniform distribution $\mathcal{U}(1,k)$. It may be different from intuition, but this intriguing property is no harm due to the main term $\mathcal{L}_{\rm ce}$ in the loss Equation~\ref{eq:total_dm}. Together with the above analysis, we can obtain the following theorem.

\begin{myTheo}
The geodesic distance in the probability space is always no larger than the corresponding distance in the data space:
\begin{align}
\label{app_lemma}
 \mathcal{D}(\hat{x}_i,\hat{x}_j) \geq \mathcal{D}(softmax(f_c(\hat{x}_i)),softmax(f_c(\hat{x}_j))) 
\end{align}
where $\mathcal{D}$ is the geodesic distance. 
\end{myTheo}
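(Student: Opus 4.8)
The plan is to read both quantities as geodesic distances induced by Riemannian metrics and to exploit the pullback relation $\mathbf{G}_{\hat{x}} = J^{T}\mathbf{G}_{s}J$ already established in the proof of Theorem 1. The data space $\mathbb{X}$ carries the Fisher metric $\mathbf{G}_{\hat{x}}$ (Definition 2) while the probability space carries $\mathbf{G}_{s}$, and the object linking them is the smooth classifier map $s(\hat{x}) = \mathrm{softmax}(f_c(\hat{x}))$ of Definition 1, whose Jacobian is exactly $J = \partial s/\partial \hat{x}$. The crucial structural fact is that $\mathbf{G}_{\hat{x}}$ is precisely the pullback of $\mathbf{G}_{s}$ under $s$, so $s$ neither stretches nor shrinks path lengths; the inequality will then follow purely because images of data-space curves form a subset of all probability-space curves joining the same endpoints.

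First I would pin down the distances. For a smooth curve $\gamma:[0,1]\to\mathbb{X}$ with $\gamma(0)=\hat{x}_i$ and $\gamma(1)=\hat{x}_j$, its length is $L_{X}(\gamma)=\int_{0}^{1}\sqrt{\dot\gamma^{T}\mathbf{G}_{\hat{x}}\dot\gamma}\,dt$, and $\mathcal{D}(\hat{x}_i,\hat{x}_j)=\inf_{\gamma}L_{X}(\gamma)$; the probability-space length $L_{Y}$ and distance $\mathcal{D}(s(\hat{x}_i),s(\hat{x}_j))$ are defined identically using $\mathbf{G}_{s}$. The central step is a length-preservation identity: given any admissible $\gamma$, its image $\sigma=s\circ\gamma$ runs from $s(\hat{x}_i)$ to $s(\hat{x}_j)$, and by the chain rule $\dot\sigma = J\dot\gamma$ at each $t$, with $J$ and $\mathbf{G}_{s}$ both evaluated at $\sigma(t)=s(\gamma(t))$. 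Substituting and invoking the pullback relation yields $\dot\sigma^{T}\mathbf{G}_{s}\dot\sigma = \dot\gamma^{T}J^{T}\mathbf{G}_{s}J\dot\gamma = \dot\gamma^{T}\mathbf{G}_{\hat{x}}\dot\gamma$, hence $L_{Y}(s\circ\gamma)=L_{X}(\gamma)$ for \emph{every} curve.

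To finish, I take infima. Since each image curve $s\circ\gamma$ is one particular admissible curve in probability space, the collection $\{s\circ\gamma\}$ is contained in the collection of all curves joining $s(\hat{x}_i)$ and $s(\hat{x}_j)$, and an infimum over a subset can only be larger. Therefore
\begin{align}
\mathcal{D}(\hat{x}_i,\hat{x}_j)=\inf_{\gamma}L_{X}(\gamma)=\inf_{\gamma}L_{Y}(s\circ\gamma)\geq\inf_{\sigma}L_{Y}(\sigma)=\mathcal{D}(s(\hat{x}_i),s(\hat{x}_j)),
\end{align}
which is exactly the claimed inequality.

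The main obstacle is conceptual rather than computational: one must get the direction of the inequality right. Because length is preserved \emph{exactly} along images of curves, the only source of strictness is a probability-space shortcut that is not the image of any data-space curve, i.e. a failure of $s$ to be surjective or bijective; this is precisely what produces $\geq$ rather than equality and requires no injectivity hypothesis on $s$. A secondary technical point is that $\mathbf{G}_{s}$ is positive definite only on the tangent space of the simplex $\{\sum_i p_i = 1\}$ (off this subspace the Fisher matrix degenerates), so the length functionals and the pullback identity should be read on that tangent space, where the trace computation in the proof of Theorem 1 already guarantees positivity. Combined with Theorem 1, which drives the output toward the uniform distribution and thereby enlarges $\mathcal{D}(s(\hat{x}_i),s(\hat{x}_j))$, this inequality lower-bounds the data-space distance and confirms the improved diversity of \name.
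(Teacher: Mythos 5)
Your proof is correct and follows essentially the same route as the paper: both rest on the pullback relation $\mathbf{G}_{\hat{x}} = J^{T}\mathbf{G}_{s}J$ and the dimension-reducing (hence degenerate) nature of the map from data space to the probability simplex. The only difference is that the paper delegates the actual length-comparison argument to a citation of \cite{zhao2019adversarial}, whereas you carry it out explicitly (exact length preservation along image curves, then the observation that image curves form a subset of all probability-space curves joining the same endpoints, so the infimum can only decrease), which makes your version self-contained and, if anything, more complete than the paper's.
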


\begin{proof}
\renewcommand{\qedsymbol}{}
Note that for most neural networks, the dimensionality of $s$ is much less than that of $\hat{x}$.
That is to say, $\eta$ is mapped by $J$ from high dimensional data space to low dimensional probability space. Thus, $f_{c}$ is a surjective mapping and $\mathbf{G}_{\hat{x}}$ is a degenerative metric tensor. Following \cite{zhao2019adversarial}, the geodesic distance in the probability space is always no larger than the corresponding distance in the data space. Assuming that $\hat{x}_i$ and $\hat{x}_j$ are two generated images from the same target class $y_t$. Then, we have the inequality as Equation~\ref{app_lemma}.
\vspace{-0.55cm}
\end{proof}
Without memory prototypes, the target model predicted probability is updated toward a pure point $p(y|\hat{x})=[0,...,p_{y_t},...,0]$, among which the valid values are only those with an index $y_t$ and all the others 0. However, Theorem 1 shows that the memory prototypes cause the $F_{\phi}$ to update toward a uniform distribution while conforming to private target data. Thus \name further leads to a mixed probability distribution together $p(y|\hat{x})$ with $\mathcal{U}(1,K)$.
This objective probability distribution is more likely to attain high variance than pure $p(y|\hat{x})$. Thus, under the \name scenario, the floor level of the geodesic distance between any two generated images in probability space is larger than that of baseline. At a higher level of data space, according to Theorem 2, the distance of generated two images is increased consistently. It indicates that these prototypes can increase the sample similarity. In a broader sense, the theorem confirms the validity of our approach.

\begin{table*}[!t]
\renewcommand\arraystretch{1.3}
    \centering
    \scalebox{1.0}{
    \begin{tabular}{cl ccccc cccc}
        \toprule
        \textbf{\textsc{Dataset}} & \textbf{\textsc{Method}} & \textbf{\textsc{$\uparrow$ Acc@1}} & \textbf{\textsc{$\uparrow$ Acc@5}}& \textbf{\textsc{$\downarrow l_{eval}^{2}$}} & \textbf{\textsc{$\downarrow l_{face}^{cos}$}} & \textbf{\textsc{$\downarrow$FID}} & \textbf{\textsc{$\uparrow$Precision}} & \textbf{\textsc{$\uparrow$Recall}} & \textbf{\textsc{$\uparrow$Density}} & \textbf{\textsc{$\uparrow$Coverage}} \\
        \midrule
         \multirow{5}{*}{\textbf{\rotatebox{90}{CelebA}}} &
     GMI \cite{zhang2020secret}& 9.76 & 15.22 & 225.41 & 1.7800 & 151.06 & 0.0829 & 0.0283& 0.0366 & 0.0379\\
 	 ~ & KED \cite{chen2021knowledge}& 14.71 & 28.69 & 197.03 & 1.4023 & 132.34 & 0.0097 & 0.0072 & 0.0056 & 0.0011 \\  
 	~ & VMI \cite{wang2021variational}& 59.74 & 74.32 & 152.24 & 0.8871 & 79.31 & 0.0210 & 0.0198 & 0.0238 & 0.0320 \\
    ~ & MIRROR \cite{MIRROR} & 77.90 & 83.40 & 138.62 & 0.9991 & 78.12 & 0.1523 & 0.0246 & 0.6080& 0.2873 \\
 	  ~ & PPA \cite{struppek2022plug} & 87.76 & 96.50 & 129.02 & 0.7218 & 59.73 & 0.2856 & 0.0513 & 0.7304 & 0.4123 \\
 	 ~ & \name & \textbf{94.02} & \textbf{99.33} & \textbf{123.54} & \textbf{0.7167} & \textbf{58.29} & \textbf{0.2956} & \textbf{0.0581} & \textbf{0.7383} & \textbf{0.4270}  \\
         \midrule
         \multirow{5}{*}{\textbf{\rotatebox{90}{FaceScrub}}}  &
 	 GMI \cite{zhang2020secret}& 12.30 & 21.96 & 163.43 & 1.4880 & 199.61 & 0.0082 & 0.0176 & 0.0174 & 0.0094 \\
 	 ~ & KED \cite{chen2021knowledge}& 16.05 & 24.74 & 152.40 & 1.2075 & 157.63 & 0.0194 & 0.0018 & 0.0058 & 0.0029 \\  
 	~ & VMI \cite{wang2021variational}& 51.77 & 78.94 & 142.78 & 0.9071 & 98.09 & 0.0343 & 0.0307 & 0.0329 & 0.0455 \\
  ~ & MIRROR \cite{MIRROR} & 82.31 & 91.74 & 139.68 & 0.9511 & 63.82 & 0.0210 & 0.0238 & 0.0605& 0.0378 \\
 	  ~ & PPA \cite{struppek2022plug} & 85.76 & 97.56 &114.99 & 0.7155 & 49.93 & 0.1427 & 0.0462 & 0.1777 & 0.1870   \\
    
 	 ~ & \name & \textbf{93.54} & \textbf{99.28} & \textbf{110.73} & \textbf{0.6392} & \textbf{47.48} & \textbf{0.1513} & \textbf{0.0571} & \textbf{0.1894} & \textbf{0.1906} \\
        \bottomrule
    \end{tabular}}
     \caption{Comparison with state-of-the-art methods against ResNet-18 trained on CelebA and FaceScrub respectively. $\uparrow$ and $\downarrow$ symbolize that higher and lower scores give better attack performance.}
  \vspace{-0.8cm}
  \label{tbl:benchnmark}
    \label{tab:ablation}
\end{table*}
\section{Experiments}
\label{sec:exp}


\subsection{Training Setups}

\paragraph{\textbf{Dataset.}}
We evaluate \name on two face image datasets, including CelebFaces Attributes \cite{liu2015deep} (CelebA) and FaceScrub Dataset \cite{ng2014data}. CelebA contains 10,177 identities with coarse alignment. The train-test-splits setting is followed with \cite{struppek2022plug}.
FaceScrub is consisted of face images of male and female celebrities, with about 200 images per person. Moreover, we validate the effectiveness of \name on Stanford Dogs dataset \cite{deng2009imagenet} and hand-written image dataset MNIST \cite{lecun2010mnist}. Due to cost concerns, we used 1/10 classes in ablation studies. More details about the dataset can be referred to Appendix B.

\paragraph{\textbf{Models.}}
We implement several popular target networks with different depths and architectures, including ResNet-18 \cite{he2016deep} nets on CelebA and FaceScrub. In the extended evaluation section, we attack the ResNeSt-101 \cite{zhang2020resnest}, ResNet-152 \cite{he2016deep} and DenseNet-169 \cite{huang2017densely} networks separately on target datasets. For generators, we consider StyleGAN2 with Flickr-Faces-HQ (FFHQ) \cite{karras2019style}, MetFaces \cite{karras2020training} and Animal Faces-HQ Dogs (AFHQ) prior for inversion. FFHQ offers higher quality human face images, while MetFaces is a face dataset extracted from the Metropolitan Museum of Art collection, which has a large distribution shift from real face images. Besides, AFHQ \cite{choi2020stargan} is used as a prior for attacking Stanford Dogs models. 

\paragraph{\textbf{Metrics.}}
We evaluate the MI attack performance from the perspective of target attack accuracy, sample diversity, and sample realism. Acc@1 and Acc@5 are defined as the top-1 and top-5 accuracy given a classifier for evaluation, respectively. We choose the Inception-v3 \cite{szegedy2016rethinking} trained on specific target datasets to test the accuracy over 50 generated samples for each target class. It is worth noting that this evaluation model is different from the target model. Higher accuracy indicates that the synthesized images reveal more private information about the target label. Furthermore, we measure the image quality by computing Frechet Inception Distance (FID) \cite{heusel2017gans} between reconstructions and private target images. For a comprehensive comparison, we compute the shortest feature distance of each generated image to target training samples. $l^{2}_{eval}$ and $l^{cos}_{face}$ represent the feature distance on the evaluation model Inception-v3 \cite{szegedy2016rethinking} and a pre-trained FaceNet \cite{schroff2015facenet}, respectively. Moreover, we report Precision-Recall \cite{kynkaanniemi2019improved} and Coverage-Density \cite{naeem2020reliable} metrics to quantify the intra-class diversity of synthesized samples explicitly. 

\paragraph{\textbf{Attack implementation.}}
The image prior assumption for pre-training StyleGAN2 models allows the target data and the auxiliary data are meant to be disjoint. The size of sampled latent vectors $z$ is 5000 for CelebA models and 2000 for FaceScrub and Stanford Dogs models. To train the mapping network $F_\phi(z)$ in StyleGAN2 and IMR prototypes $W$, we use the Adam optimizer with the learning rate 0.005 and batch size of 16, $\beta_{1}=0.1$ and $\beta_{2}=0.1$ \cite{kingma2015adam}. The training epochs on FaceScrub, Stanford Dogs, and CelebA experiments are set as 50, 50, and 70, respectively. Besides, the transformation operation follows \cite{struppek2022plug}.
We set the prototype number $N_w=500$ with $\rho=250$, $\lambda_1=0.3$ and $\lambda_2=0.7$. $r$ is set as 0.7. We perform all our experiments on 8 NVIDIA RTX 3090 GPUs, which are based on CUDA 11.4 and Python 3.8.10. For more details of the implementations, please refer to Appendix B. 

\subsection{Experiment results}
To thoroughly evaluate our \name, we conduct experiments on two privacy-preserving human-face datasets and two non-privacy datasets, accompanied by a detailed ablation study.
Additionally, we have analyzed the computational complexity of \name, and find that it adds only a little extra computational cost during training compared to the baseline, with no extra cost in inference attacks. 

\begin{figure}
  \centering
  \includegraphics[width=0.9\linewidth]{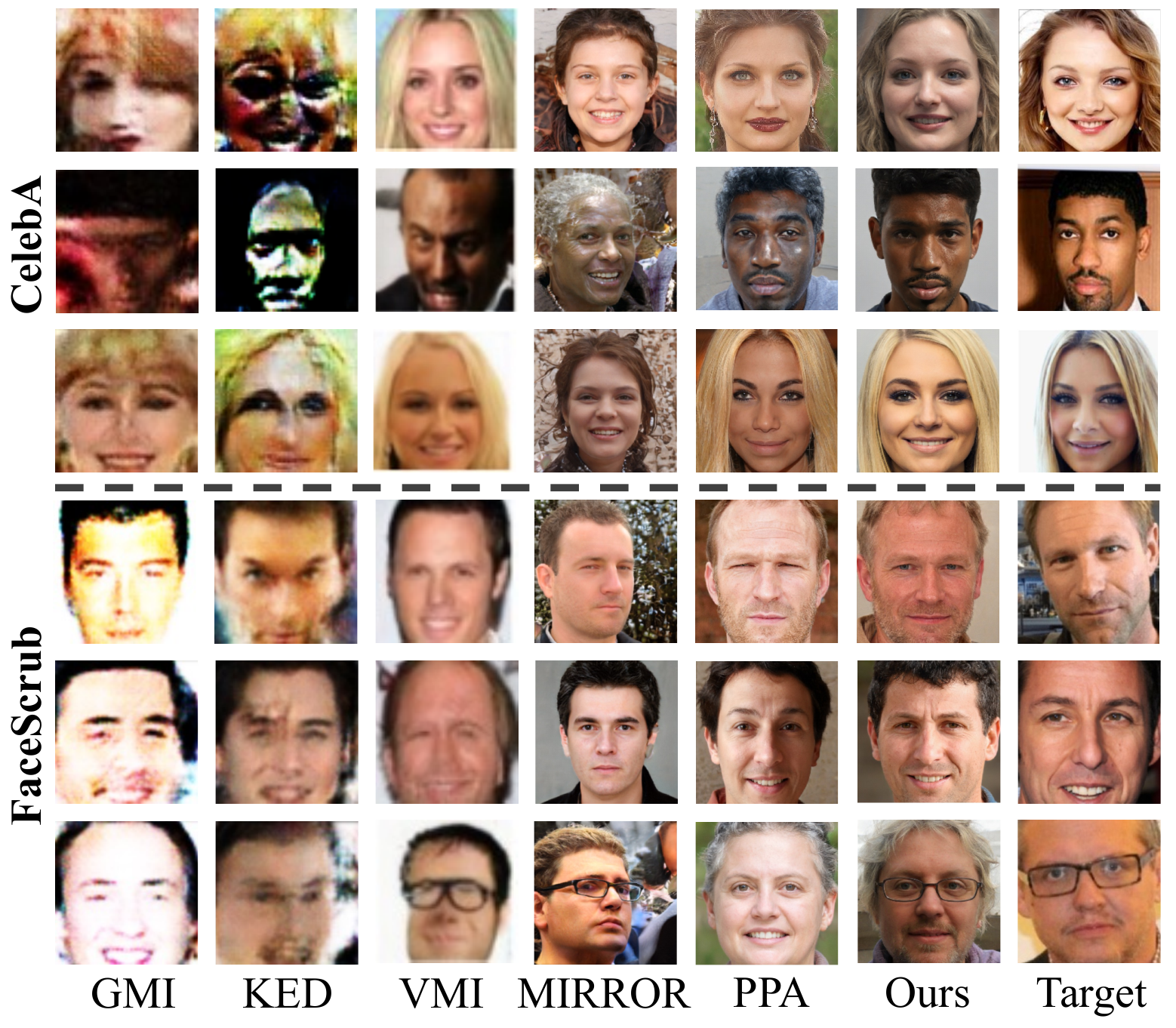}
  \vspace{-0.5cm}
  \caption{Visual comparison of attack results against the ResNet-18 trained on CelebA and FaceScrub.}
  \vspace{-0.6cm}
  \label{fig:vis_benchmark}
\end{figure}

\noindent \paragraph{\textbf{Benchmark results on privacy data.}}
We conduct a comprehensive evaluation of the proposed \name compared with several MI attacks, as shown in Table~\ref{tbl:benchnmark}. The generators for all methods are pre-trained on the FFHQ dataset and try to reveal the private data in CelebA and FaceScrub datasets. 
We find that \name can achieve state-of-the-art attack accuracy.
For CelebA datasets, the \textsc{Acc@1} of \name is higher than PPA by 6.26\%, while having a smaller distance between generated images and private data. The improved four diversity metrics indicate that our method can learn more characteristics of facial images per class. Additionally, \name outperforms all MI attacks on recovering the FaceScrub dataset, achieving the highest attack accuracy 93.54\% with better sample realism and diversity. We provide generated samples in 
Figure~\ref{fig:vis_benchmark}. Notice that the synthesized images by GMI and KED are at low resolution $64\times 64$ pixels. Besides, GMI and KED can not create realistic samples in our setting. VMI is not effective in generating face images with high fidelity. 
The faces inverted by MIRROR and PPA are more human-recognizable than the attacks mentioned above.
However, the images generated using \name show maximum similarity with the target privacy data.
\begin{figure}[t]
  \centering
  \includegraphics[width=1\linewidth]{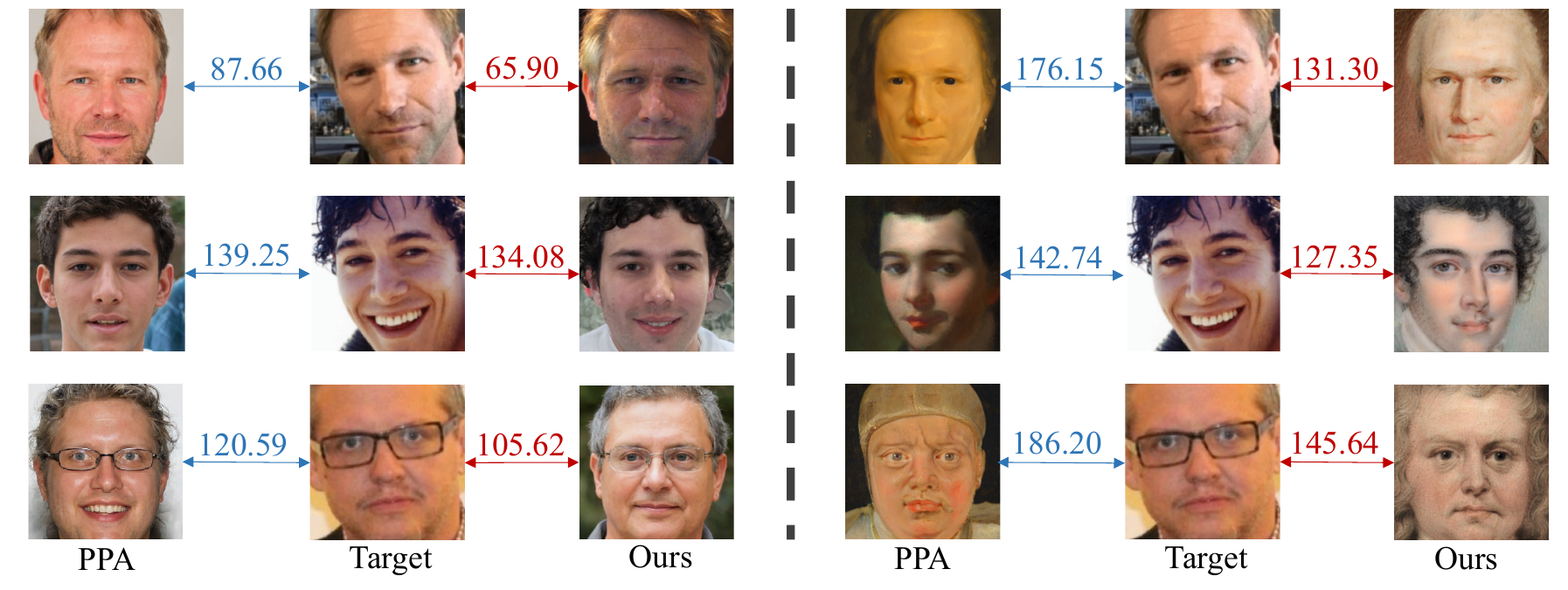}
  \vspace{-0.6cm}
  \caption{Visual results against the ResNeSt-101 with FFHQ (left) and MetFaces (right) prior. We show the quantitative similarity between the generated images and targets using $\downarrow l_{eval}^{2}$ metric. The images produced via \name have lower distance values, particularly when the prior distribution differed significantly from the target image distribution.}
  \vspace{-0.3cm}
  \label{fig:vis_2}
\end{figure}

\noindent \paragraph{\textbf{Extended evaluation.}}
We compare \name with the PPA with FFHQ and MetFaces image priors, as shown in Table~\ref{tbl:extend}.
See Appendix C for more experiments on attacking ResNeSt-101 and DenseNet-169.
We find that \name achieves higher attack accuracy across multiple target models.
For example, when attacking ResNet-152 trained on CelebA with FFHQ prior, the \textsc{Acc@1} of \name is higher than PPA by 2.38\%. Meanwhile, the sample realism and diversity of \name outperform PPA in all cases. 
With the loosened image prior, \name yields significantly better attack performance than baseline.
For attacking FaceScrub with Metfaces prior, the \textsc{Acc@1} of \name outperforms PPA by a large margin of 22.5\%. The distance between generated and target images on the evaluation model decreases by 5.94 on \textsc{$l_{eval}^{2}$}.
Figure~\ref{fig:vis_2} and the qualitative evaluation both support our findings, showing that the generated samples by \name had lower $\downarrow l_{eval}^{2}$ than the target data, particularly when attacking without knowledge of sensitive attributes using StyleGAN2 trained on MetFaces. These results suggest that the dynamic memory prototype enhances both image inversion and diversity in various settings.
However, the study also revealed a significant drop in \name's \textsc{Acc@1} from 94.06\% to 70.58\% when transitioning from FFHQ to MetFaces prior, indicating that the success of the attacks heavily depended on the diversity of prior information.

\begin{table}[!t]
    \centering
    \scalebox{0.85}{
    \begin{tabular}{ccc cccc c}
        \toprule
         ~ & \textbf{\textsc{Prior}}  & \textbf{\textsc{Method}} & \textbf{\textsc{$\uparrow$ Acc@1}} & \textbf{\textsc{$\downarrow l_{eval}^{2}$}} & \textbf{\textsc{$\downarrow l_{face}^{cos}$}} & \textbf{\textsc{$\downarrow$FID}} & \textbf{\textsc{$\uparrow$DIV}}  \\
         \midrule
        \multirow{4}{*}{\rotatebox{90}{CelebA}} & 
 	  \multirow{2}{*}{FFHQ} & PPA & 91.68 &  135.51 & 0.7303 & 58.16 & 0.3684 \\
 	 ~ & ~ & \name & \textbf{94.06} & \textbf{132.45} & \textbf{0.7030} & \textbf{55.92} &  \textbf{0.3723}
 \\
       \cmidrule{2-8}
     ~ &
 	 \multirow{2}{*}{Met.F} & PPA & 50.52 & 158.30 & 1.0708 & 89.74 & 0.1837 \\
 	 ~ & ~ & \name & \textbf{70.58} & \textbf{149.37} & \textbf{0.9821} & \textbf{89.33} & \textbf{0.1895} \\
 \midrule
 	 \multirow{ 4}{*}{\rotatebox{90}{FaceScrub}} &
 	 \multirow{2}{*}{FFHQ} & PPA & 75.80 & 120.70 & 0.7901 & 69.41 & 0.1419 \\
 	 ~  & ~ & \name & \textbf{86.98} & \textbf{120.00} & \textbf{0.7526} & \textbf{68.93} & \textbf{0.1466}
  \\
 	 \cmidrule{2-8}
 	 ~ & 
 	 \multirow{2}{*}{Met.F} & PPA &46.20 & 137.86 & 0.9524 & 94.78 & 0.1276\\
 	 ~ & ~ & \name &\textbf{68.70} & \textbf{131.92} & \textbf{0.8511} & \textbf{94.32} & \textbf{0.1579}\\
        \bottomrule
    \end{tabular}}
     \caption{Comparison results with various target datasets and image priors against ResNet-152 trained on CelebA and FaceScrub datasets. Met.F stands for MetFaces dataset.}
  \vspace{-0.8cm}
  \label{tbl:extend}
\end{table}
\vspace{-0.5pt}

\noindent \paragraph{\textbf{Benchmark results on non-privacy data.}}
We also provide the results on attacking the Stanford Dogs and MNIST dataset.

\begin{table}[t]
\renewcommand\arraystretch{1.3}
    \centering
    \small
    \scalebox{0.9}{
    \begin{tabular}{cccc cccc c}
        \toprule
         \textbf{\textsc{Architecture}}  & \textbf{\textsc{Method}} & \textbf{\textsc{$\uparrow$ Acc@1}} &  \textbf{\textsc{$\downarrow l_{eval}^{2}$}} & \textbf{\textsc{$\downarrow$FID}} & \textbf{\textsc{$\uparrow$Div}} \\
         \midrule
     \multirow{2}{*}{ResNeSt-101} & PPA & 97.60& 42.57 &  64.00& 0.3050 \\
  ~ &  \name &\textbf{99.25}  & \textbf{41.87}  & \textbf{63.43}& \textbf{0.3122}\\
  \multirow{2}{*}{ResNet-152} & PPA & 89.00& 40.75 & 63.32 & 0.2429 \\
 	 ~ &  \name & \textbf{97.50}& \textbf{39.71} & \textbf{61.46} & \textbf{0.2458}  \\
     \multirow{2}{*}{DenseNet-169} & PPA &  95.00 &38.45 &  63.17 & 0.2893 \\
 	  ~ & \name & \textbf{96.80} & \textbf{37.16} &  \textbf{61.73} & \textbf{0.3082}\\
        \bottomrule
    \end{tabular}}
     \caption{Comparison of attack results on Stanford Dogs dataset with AFHQ image prior. The \textsc{Div} is the mean of \textsc{Precision}, \textsc{Recall}, \textsc{Density} and \textsc{Coverage}, capturing the intra-class diversity of the generated samples. 
     }
  \vspace{-0.7cm}
    \label{tbl:ext_dog}
    \vspace{-0.3cm}
\end{table}

\noindent \textbf{Results on Stanford Dogs.}
The results in Table~\ref{tbl:ext_dog} show that \name surpasses PPA on three target models regarding attack success accuracy.
As confirmed by the FIDs, the average quality of our \name is higher than PPA. We visualize the generated images in Figure~\ref{fig:app_exp}a. It indicates the consistent performance of \name across different types of images.
Consequently, by representing target data from historical information, \name can capture more target-class sensitive information and expand the sample diversity.

\begin{table}[t]
\renewcommand\arraystretch{1.3}
    \centering
    \scalebox{0.9}{
    \begin{tabular}{cccccc}
    \toprule[1pt]
     \textbf{\textsc{Method}} & GMI \cite{zhang2020secret} & KED \cite{chen2021knowledge} & VMI \cite{wang2021variational}  & PPA \cite{struppek2022plug} & \name\\
     \midrule
     \textbf{\textsc{$\uparrow$ Acc@1}} & 27.53 & 38.61 & 28.60 & 45.60 & \textbf{48.72} \\
     \textbf{\textsc{$\downarrow$ FID}} & 90.64 &  72.41&  84.23& 58.05 & \textbf{57.43}\\
    \bottomrule
    \end{tabular}}
     \caption{Comparison results on attacking ResNet-18 with MNIST.}
    \label{tbl:ablation_1}
    \vspace{-0.8cm}
\end{table}

\noindent \textbf{Results on MNIST.}
To further show the effectiveness of \name and demonstrate its significance, we have run more experiments on MNIST \cite{lecun2010mnist}.
To ensure the StyleGAN2 contains hand-written image priors, we use all of the images with labels 5, 6, 7, 8, 9 as a public set while attacking the images with labels 0, 1, 2, 3, 4. All the images are resized to $32 \times 32$. The details of this experimental setting refer to the Appendix B.
In Table~\ref{tbl:ablation_3}, the ``Acc@1” column shows that the attack success rate promotion of \name on MNIST is 3.12 compared to state-of-the-art methods. Thus, \name is a more general method that works better for both natural and facial images and can be more practical.

We compare the reconstruction quality of different attacks in Figure~\ref{fig:app_exp}b.
The images produced by \name can capture most of the target class characteristics.
It can also be found that the generated images with successful attacks depend much on the image prior (\ie, images with labels [0,1,2,3,4]). The attack results on MNIST further demonstrate the superiority of \name.

\begin{figure}[t]
  \centering
  \includegraphics[width=1\linewidth]{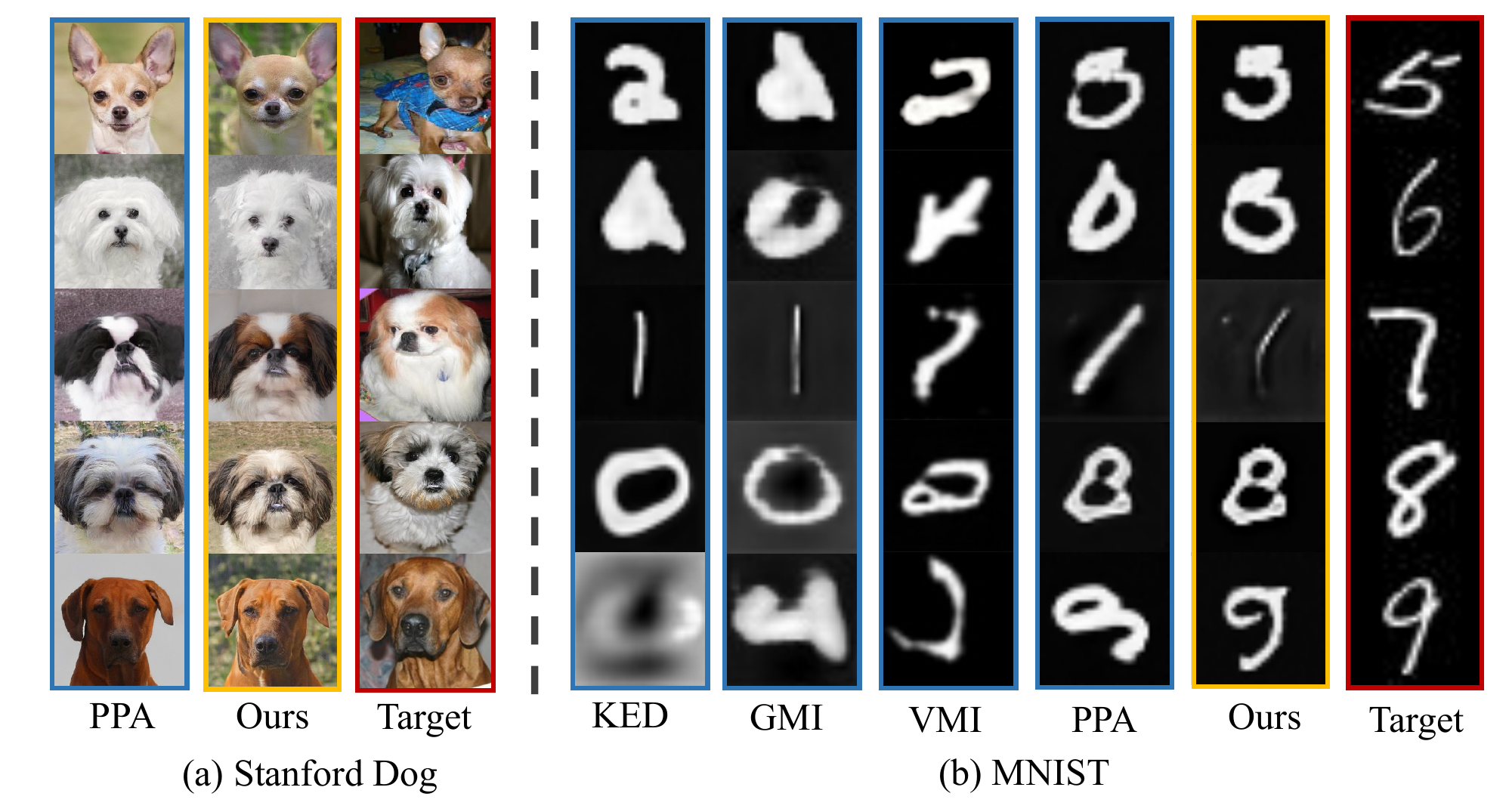}
  \vspace{-0.6cm}
  \caption{(a) Visual results against the ResNeSt-101 trained on Stanford Dog dataset. (b) Visual results against the ResNet-10 trained on MNIST dataset.}
  \vspace{-0.4cm}
  \label{fig:app_exp}
\end{figure}


\subsection{Ablation study}
\label{sec:ablation}
We conduct a series of experiments to examine the impact of several key aspects on the performance of the proposed \name attack. Specifically, we analyze the influence of different values of $\lambda_1$ and $\lambda_2$ (representing the impact of IMR and IDR on the model inversion attack), the prototype size $N_w$, the prototype size $\rho$, and the momentum coefficient $r$. Our evaluation was conducted using ResNet-18 trained on CelebA as the target model and StyleGAN2 with FFHQ as the image prior for the attack.

\noindent \paragraph{\textbf{Effect of $\lambda_1$ and $\lambda_2$.}}
$\lambda_1$ and $\lambda_2$ are the hyper-parameters to balance $\mathcal{L}_{\rm ce}$ and $\mathcal{L}_{\rm imr}$ with $\mathcal{L}_{\rm idr}$. Specifically, setting $\lambda_1=0$ or $\lambda_2=0$ allows us to evaluate the individual effect of prototype IDR and IMR.
As shown in Figure \ref{fig:imr_idr}, to better validate the two components of \name, studies are performed in a grid-search way.
In panels (a) and (b), we report the attack success rate and sample diversity over various $\lambda_1$ and $\lambda_2$. The optimal $\lambda_1$ and $\lambda_2$ are determined by $[Normalize(Acc@1) + Normalize(DIV)]/2$, shown in panel (c).

Notably, both IMR and IDR can benefit the attack performance.
Larger $\lambda_2$, \ie, \name with the higher weight of IDR can achieve better performance on attack accuracy, as shown in Figure \ref{fig:imr_idr}a. It indicates that the inter-class representation of IDR helps to capture more sensitive information of the targets.
Shown in Figure \ref{fig:imr_idr}b, the diversity performance of \name is mainly led by $\lambda_1$, \ie, the weight of IMR. It verifies the advantages of IMR that representing the target class distribution can encourage the $F_{\phi}$ to be more exploratory and augment sample diversity.
Finally, \name simultaneously considers the intra-relation and inter-relation of the privacy dataset, resulting in the best trade-off ($\lambda_1 = 0.3$ and $\lambda_2 = 0.7$) between sample quality and sample diversity. More results on the effect of IMR and IDR refer to the Appendix C.

\begin{figure}[t]
  \centering
  \includegraphics[width=1\linewidth]{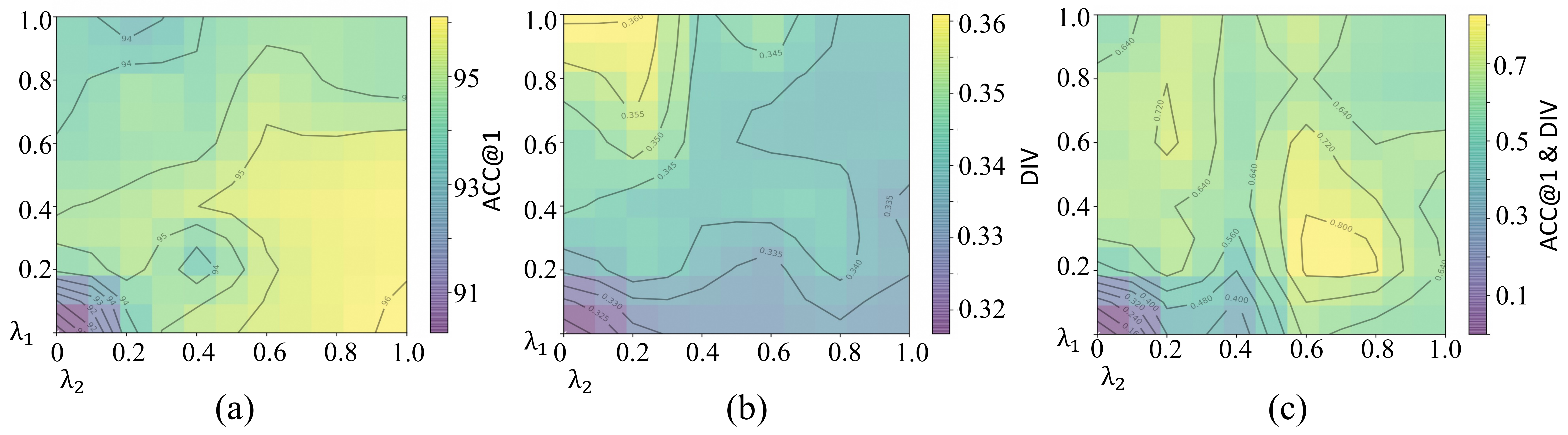}  
  \vspace{-0.8cm}
  \caption{Ablation study for the $\lambda_1$ and $\lambda_2$ on attack success rate and sample diversity, echoing (a): $\textsc{Acc@1}$ and (b): $\textsc{DIV}$. Chosen optimal hyper-parameters including both $\textsc{Acc@1}$ and $\textsc{DIV}$, echoing (c).
}
  \vspace{-0.6cm}
  \label{fig:imr_idr}
\end{figure}

\begin{table}[h]
\vspace{-0.2cm}
\renewcommand\arraystretch{1.3}
    \centering
    \scalebox{0.9}{
    \begin{tabular}{c ccccc}
        \toprule
        \textbf{\textsc{$N_w$}} &
        \textbf{\textsc{$\uparrow$ Acc@1}}& 
        \textbf{\textsc{$\downarrow l_{eval}^{2}$}} &
        \textbf{\textsc{$\downarrow l_{face}^{cos}$}} &\textbf{\textsc{$\downarrow$FID}} & \textbf{\textsc{$\uparrow$Div}} \\
        \midrule
     100  & \textbf{95.25} & \textbf{131.42} & \textbf{0.7449} & 90.17 & 0.3097 \\
 	 500 & 95.00 & 131.28 & 0.7448 & \textbf{89.03} & 0.3532 \\
     900 & 94.75 & 131.64 & 0.7471 & 89.97 & \textbf{0.3580}\\
        \bottomrule
    \end{tabular}}
     \caption{Results of \name, with different prototype size $N_w$, against ResNet-18 trained on CelebA.
     }
    \vspace{-0.6cm}
    \label{tbl:ablation_3}
    \vspace{-0.3cm}
\end{table}

\noindent \paragraph{\textbf{Effect of prototype size $N_w$.}}
The proposed intra-class multicentric representation introduces a learnable prototype set $W\in\mathbb{R}^{N_w \times N_d}$. We present the results of the study conducted to investigate the impact of prototype size $N_w$. $\rho$ is set to be half of $N_w$.
Echoing Table \ref{tbl:ablation_3}, \name with IMR can achieve more than 20\% DIV improvement as the prototype size increases from 100 to 900.
This suggests that IMR can effectively capture visual concepts of the target class through multi-centric representation. However, to balance performance with computation, we set the prototype size $N_w$ to 500 for the final experiments.

\begin{table}[ht]
\renewcommand\arraystretch{1.3}
    \centering
    \scalebox{0.9}{
    \begin{tabular}{c ccccc}
        \toprule
        \textbf{\textsc{$\rho$}} &
    \textbf{\textsc{$\uparrow$ Acc@1}}& 
        \textsc{$\downarrow l_{eval}^{2}$} &
        \textsc{$\downarrow l_{face}^{cos}$} &\textbf{\textsc{$\downarrow$FID}} & \textbf{\textsc{$\uparrow$Div}} \\
        \midrule
     100  &  95.00 & \textbf{131.16} & 0.7484 & 89.73 & 0.3361  \\
 	 250  & \textbf{95.00} & 131.28 & \textbf{0.7448} & \textbf{89.03} & 0.3532 \\  
 	 400 & 94.25 & 132.76 & 0.7563 & 89.16 & \textbf{0.3710} \\
        \bottomrule
    \end{tabular}}
     \caption{Ablation study of $\rho$ in IMR, which is the number of positive prototypes in $W$.
     }
  \vspace{-0.6cm}
    \label{tbl:ablation_rhp}
  \vspace{-0.4cm}
\end{table}

\noindent \paragraph{\textbf{Effect of prototype size $\rho$.}}
To evaluate the effect of prototype size of IMR on the attack performance, we vary the number of positive prototypes, represented by $\rho$, as shown in Table~\ref{tbl:ablation_rhp}. 
As $\rho$ rises from 100 to 400, \ie, more concepts for the target class are represented, which leads to an improvement in the diversity of the generated images. However, a larger value of $\rho$ may result in a trade-off between attack accuracy and sample diversity. To balance these two factors, we set $\rho=250$ as the optimal value for achieving better results.

\begin{table}[!h]
\renewcommand\arraystretch{1.3}
    \centering
    \scalebox{0.9}{
    \begin{tabular}{c ccccc}
        \toprule
        \textbf{\textsc{$r$}} &
        \textbf{\textsc{$\uparrow$ Acc@1}}& 
        \textsc{$\downarrow l_{eval}^{2}$} &
        \textsc{$\downarrow l_{face}^{cos}$} &\textbf{\textsc{$\downarrow$FID}} & \textbf{\textsc{$\uparrow$Div}} \\
        \midrule
     0.1 & 94.75 & 132.25 & 0.7482 & 89.08 & 0.3332\\
 	 0.3 & 95.00 & 131.96 & 0.7488 & 89.87 & 0.3477 \\
     0.5 & 95.00 & 132.53 & 0.7516 & 89.58 & \textbf{0.3662} \\
     0.7 & \textbf{95.75} & \textbf{131.48} & \textbf{0.7446} & 88.44 & 0.3413\\
     0.9 & 95.25 & 132.52 & 0.7482 & \textbf{88.35} & 0.3591 \\
        \bottomrule
    \end{tabular}}
     \caption{Ablation study of $r$. The performance is evaluated on attacking ResNet-18 with CelebA.
     }
     \vspace{-0.6cm}
    \label{tbl:ablation_4}
    \vspace{-0.2cm}
\end{table}
\noindent \paragraph{\textbf{Effect of $r$.}}
The momentum coefficient $r$ controls the updating the smoothness of the memory bank in IDR. 
As shown in Table \ref{tbl:ablation_4}, our method is generally not sensitive to the selection of $r$. We set $r=0.7$ for the optimal results. Noted that using too fast updating (\eg, 0.1) causes a small performance drop in attack accuracy. This is mainly because the updating of the memory bank is jointly conducted with the model training. 
If the memory bank is updated too quickly, it may not have enough time to properly learn from the prototypes being stored in the memory bank, which hinders the capacity of inter-class discriminability.

\subsection{Complexity Analysis}
Noted that incorporating \name into the training process incurs minimal additional computational cost compared to the baseline method, with no extra cost incurred during inference attacks. Specifically, the time complexity of IMR and IDR during training is $\mathcal{O}((K + N_w)N_dN_d)$. In terms of the attack on ResNet-18, it adds approximately 0.73M flops, which is negligible compared to the overall model.
Additionally, our method does not add any extra parameters to the generative model, i.e., it incurs no additional computational cost during the generation process.

\section{Conclusion and Future Work}
In this paper, we propose a Dynamic Memory Model Inversion Attack (\name), which improves the model inversion by reusing the memorized knowledge. Specifically, \name constructs an intra-class multicentric representation (IMR) term and an inter-class discriminative representation (IDR) term.
The learnable IMR models multiple concepts for representing the target class, which benefits from increasing sample diversity. Meanwhile, IDR injects the memorized knowledge into non-parametric prototypes to enforce the representations away from different categories, thus enhancing target class discriminability.
Overall, \name reports the state-of-the-art attack performance on multiple benchmarks. We will explore how to apply \name on black-box model inversion attacks in the future, where the attacker cannot obtain the information of the model and can only get the prediction result by querying the target model.




\bibliographystyle{ACM-Reference-Format}
\bibliography{sample-base}

\end{document}